\pgfplotsset{compat=1.18}
\newcommand{\rex}{\textsc{r}e\textsc{x}\xspace}
\newcommand{\gradcam}{\textsc{g}rad\textsc{c}am\xspace}
\newcommand{\lime}{\textsc{lime}\xspace}
\newcommand{\shap}{\textsc{shap}\xspace}
\newcommand{\multideepcover}{\textsc{MultEX}\xspace}
\newcommand{\ie}{\emph{i.e.}\xspace}
\newcommand{\sag}{\textsc{Sag}\xspace}
\newtheorem{theorem}{Theorem}
\newtheorem{lemma}[theorem]{Lemma}
\newtheorem{proposition}[theorem]{Proposition}
\newtheorem{definition}{Definition}
\newcommand{\lem}{\begin{lemma}}
\newcommand{\elem}{\end{lemma}}
\newcommand{\pro}{\begin{proposition}}
\newcommand{\epro}{\end{proposition}}
\newcommand{\dfn}{\begin{definition}}
\newcommand{\edfn}{\end{definition}}
\newcommand{\K}{{\mathcal{K}}}
\newcommand{\Scal}{{\cal S}}
\newcommand{\bd}{\begin{definition}}
\newcommand{\ed}{\end{definition}}
\newcommand{\be}{\begin{enumerate}}
\newcommand{\bi}{\begin{itemize}}
\newcommand{\ee}{\end{enumerate}}
\newcommand{\ei}{\end{itemize}}
\newcommand{\U}{{\cal U}}
\newcommand{\cF}{{\cal F}}
\newcommand{\V}{{\cal V}}
\newcommand{\R}{{\cal R}}
\newcommand{\stam}[1]{}
\newcommand{\commentout}[1]{}
\newcommand{\clm}{\begin{claim}}
\newcommand{\eclm}{\end{claim}}
\newcommand{\thm}{\begin{theorem}}
\newcommand{\ethm}{\end{theorem}}
\definecolor{darkred}{rgb}{0.65,0,0}
\newcommand{\sat}{\models}
\newcommand{\xam}{\begin{example}}
\newcommand{\exam}{\end{example}}
\begin{document}

\begin{frontmatter}
\paperid{7738} 

\title{Multiple Different Black Box Explanations \\ for Image Classifiers}
\author[A]{Hana Chockler}
\address[]{}
\author[A]{David A. Kelly}
\address[]{}
\author[B]{Daniel Kroening}

\address[A]{King's College London, UK}
\address[B]{Amazon, UK}

\begin{abstract}
  Existing explanation tools for image classifiers usually give only
  a single explanation for an image's classification. 
  For many images, however, image
  classifiers accept more than one explanation for the image label.
  These explanations are useful for analyzing the decision process of the classifier and
  for detecting errors.
  Thus, restricting the number of explanations to just one severely limits
  insight into the behavior of the classifier.
  In this paper, we describe an algorithm and a tool,
  \multideepcover, for computing multiple explanations as the output of a
  black-box image classifier for a given image. Our algorithm uses a principled
  approach based on actual causality. We analyze its theoretical complexity and evaluate
  \multideepcover against the state-of-the-art across three different models and three different datasets.
  We find that \multideepcover finds more explanations and that these explanations are of higher quality.
\end{abstract}

\end{frontmatter}

\section{Introduction}%
\label{sec:introduction}

\begin{figure*}[t]
    \centering 
    \begin{subfigure}[b]{0.3\textwidth}
        \centering
        \includegraphics[height=100pt, width=100pt]{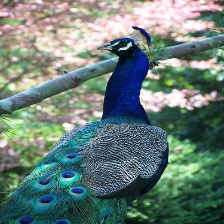}
        \caption{Class $84$: peacock}\label{subfig:peacock}
    \end{subfigure}
    \vspace{5mm}
    \begin{subfigure}[b]{0.3\textwidth}
        \centering
        \includegraphics[height=110pt, width=110pt]{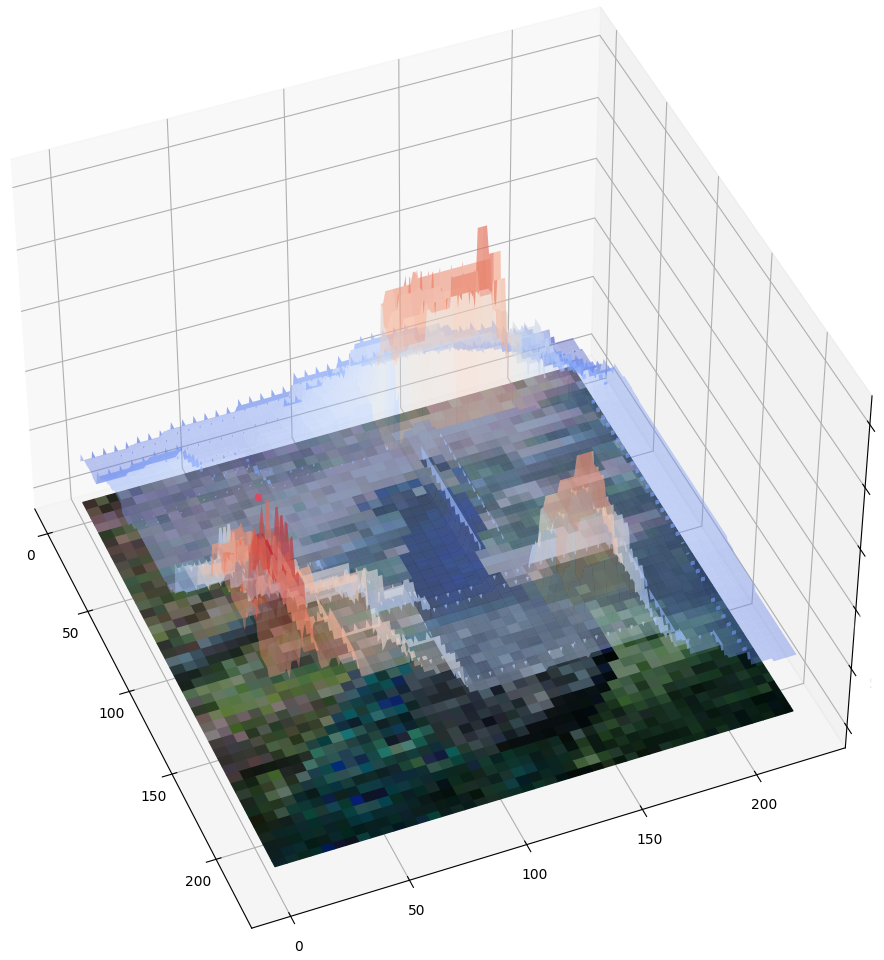}
        \caption{Responsibility Map}\label{subfig:peacock_rm}
    \end{subfigure}
    \begin{subfigure}[b]{0.3\textwidth}
        \centering
        \includegraphics[height=100pt, width=100pt]{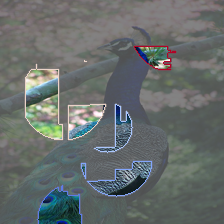}
        \caption{Explanations with no overlap}\label{subfig:peacock_no_over}
    \end{subfigure}
    \begin{subfigure}[b]{0.3\textwidth}
        \centering
        \includegraphics[height=100pt, width=100pt]{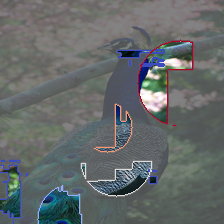}
        \caption{Explanations with partial overlap}\label{subfig:peacock_03_over}
    \end{subfigure}
    \begin{subfigure}[b]{0.3\textwidth}
        \centering
        \includegraphics[height=100pt, width=100pt]{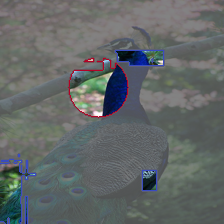}
        \caption{Explanations with high confidence}\label{subfig:peacock_high_conf}
    \end{subfigure}
    \begin{subfigure}[b]{0.3\textwidth}
        \centering
        \includegraphics[height=100pt, width=100pt]{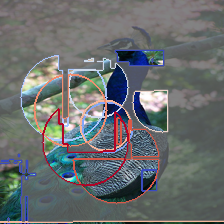}
        \caption{High confidence with overlap}\label{subfig:peacock_high_over}
    \end{subfigure}
    \vspace{5mm}
    \caption{\multideepcover on a peacock. \multideepcover computes a responsibility landscape (\Cref{subfig:peacock_rm}): this landscape encodes so much information about the image that, from it, we can compute non-overlapping explanations (\Cref{subfig:peacock_no_over}, partially overlapping to any degree (\Cref{subfig:peacock_03_over}), explanations which have higher confidence than the original image (\Cref{subfig:peacock_high_conf} and explanations with higher confidence than the original image, and total permissible overlap (\Cref{subfig:peacock_high_over}).}\label{fig:accept}
    \vspace{2mm}
\end{figure*}

AI models are now a primary building block of most computer vision systems. The opacity of
some of these models (e.g., neural networks) creates demand for explainability techniques, 
which attempt to provide insight into
why a particular input yields a particular observed output.
Beyond increasing a user's confidence in the output, and hence also their trust
in the AI model, these insights help to uncover subtle classification errors
that are not detectable from the output alone~\cite{chockler2024}.





Existing explainability tools use a variety of definitions for explanations, often
tied to a particular method of extracting them.
The definition we use here is grounded in the theory of actual causality and, roughly
speaking, defines an explanation as a smallest part of the input image  
that is sufficient for the classifier to yield the same top label as the original
image (see~\Cref{sec:cause}). If the top classification for the model is, say, `peacock' (\Cref{fig:accept}), then a causal explanation is a subset of image pixels also labeled `peacock' as the top classification.
Explanations need not be unique. A human could point to many parts of the image of a peacock and state that this part is sufficient to label the entire image `peacock'. There is not reason, a priori, to assume that a model cannot do the same. 





Why do we need to have multiple explanations? Previous work, using a user study, demonstrating that people value having multiple explanations of the model's decisions, as it increases their confidence in the classification and gives them more insight into the reasoning process of the model~\cite{SLKTF21}.
The increase in confidence and trust is reaffirmed in~\cite{Han10,Mil19}.
Even, perhaps, more importantly, as we show in this paper, the prevalence of multiple explanations suggests that algorithms for computing more than one explanation are essential for understanding the reasoning process of image classifiers and uncovering subtle classification errors. 

To illustrate the latter point, the image in \Cref{fig:misclassification} is classified by a ResNet50 as `tennis racket'.~\Cref{subfig:6,subfig:9} both show that sections of the image that overlap with the racket are sufficient for the overall classification.~\Cref{subfig:1}, however, shows a 
part of the player's shorts as being sufficient for the `tennis racket' classification, with \emph{higher} confidence ($0.25$) that for~\Cref{subfig:6,subfig:9}, which are both around $0.23$. This is a concerning finding for the users of the model.

Most existing techniques provide only one explanation, potentially missing the error.
The one notable exception is \sag~\cite{SLKTF21}, a pioneering work in the exploration of multiple explanations for image classifiers. 
However, \sag suffers from a number of shortcomings. First, it is not a true black-box tool, as it requires access to the gradient of the model. 
Such tools might be more accurately called \emph{grey-box}, as a proprietary model may not expose the gradient during inference.
Furthermore, \sag{}'s concept of explanation is rather different from what a human might accept as an explanation, and is very different from a causal explanation. \sag{} is liberal in what it considers an explanation, resulting in potentially thousands of them found for a single image.~\cite{Jiang_2024_CVPR} discovered this when investigating ``compositionality'', which they define as a conjunction of parts of an image (patches) that have high likelihood ratios for a particular classification. Intuitively though, it is unlikely that an image classified as, say `dog', has thousands of explanations: it happens because \sag{}'s explanation refers to \emph{any} element in the output tensor above a predefined probability threshold. We argue that this renders \sag{}'s results uninformative. 
Indeed, with a suitable threshold, \emph{any} combination of patches is sufficient for the desired classification. 
Causality, on the other hand, says that a set of pixels explains label `a' if that set is sufficient to be the actual (top) classification of the model. This is far stricter than \sag{}'s explanations.

To overcome these problems, we present~\multideepcover, a black-box algorithm and a tool for computing multiple explanations for image classifiers. Using the theory of actual causality, \multideepcover computes a causal responsibility ranking of the pixels of the image, from which it extracts multiple different explanations. Unlike with \sag, \multideepcover is not fixed to a rigid grid, so its explanation discovery is much more flexible (\Cref{fig:accept}). \multideepcover also allows an optional confidence threshold: unlike with \sag, where the threshold dictates what constitutes an explanations, \multideepcover{}'s threshold affects the \emph{quality} of an explanation (\Cref{subfig:peacock_high_conf}), picking out subsets of pixels which can have even higher confidence than the entire image. Furthermore, the explanations produced by
\multideepcover are \emph{actual}, that is, they always explain the top (the most likely) classification of the input image. This in contrast
to \sag that produces explanations that do not necessarily correspond to the top classification (see \Cref{sec:experiments}).
In \Cref{sec:theory}, we also present an exponential upper bound on the number of possible explanations and
demonstrate that this bound is tight. In \Cref{sec:experiments} we experimentally compare \multideepcover with \sag
on standard benchmarks. We show that \multideepcover consistently finds more explanations than \sag.
We also show that \multideepcover performs well when we add in a confidence threshold to increase explanation quality. 

We provide the details of the benchmark sets, the models, and the main results in the paper. 
The tool, all datasets, and the full set of reported as well as additional results are submitted as a part of the supplementary material.

\section{Related Work}\label{sec:related}

There is a large body of work on algorithms for computing one explanation
for a given output of an image classifier. They can be largely grouped into white-box and black-box methods. 
White-box methods frequently use variations on propagation-based
explanation methods to back-propagate a model's decision to the input
layer to determine the weight of each input
feature for the decision~\cite{springenberg2015striving,
sundararajan2017axiomatic, bach2015pixel, shrikumar2017learning,
nam2020relative}. \gradcam, a white-box technique which has spawned many variants, only needs
one backward pass and propagates the class-specific gradient into the final convolutional layer of a 
DNN to coarsely highlight important regions of an input image~\cite{CAM}. 
%
%

Perturbation-based explanation approaches introduce perturbations to the input space directly
in search for an explanation. These are typically found in black-box explanation methods.
\shap (SHapley Additive exPlanations) 
computes Shapley values of different parts of the input and uses them to rank the
features of the input according to their importance~\cite{lundberg2017unified}.
\lime constructs a simple model to label the original input and its neighborhood of perturbed
images and uses this model to estimate the importance of different parts of the input
~\cite{lime, datta2016algorithmic,
chen2018learning, petsiuk2018rise, fong2019understanding}. 
Finally, \rex~\cite{chockler2024} ranks elements of the image according to their responsibility for the classification
and uses this ranking to greedily construct a small explanation. 
The \rex ranking procedure is based on an approximate computation of causal responsibility. 
None of these black-box tools provide multiple explanations of the same image.

Work on calculating more than one explanation for a given classification outcome
is in its infancy. 
A recent paper on abductive explanations raises the problem of generating only one explanation and suggests \emph{aggregating} 
multiple explanations to obtain the full information about the importance of different features
of the input~\cite{BILVZ24}. However, in the absence of a tool for reliably and systematically
generating multiple explanations, they rely on re-executing \lime and \shap multiple times, in the hope that
their inherent non-determinism results in different explanations. This is clearly not a systematic
approach.

To the best of our knowledge, there is only one algorithm and tool that specifically
computes multiple explanations of image classifiers---\sag, described in~\cite{SLKTF21}.
We describe \sag's algorithm in more detail in~\Cref{sec:experiments}
and argue that its definition of explanation is too permissive. Our experimental results show that, even with our stricter definition, we find more explanations than \sag.

Finally, we mention a growing body of work on logic-based explanations~\cite{INMS19,MSI22,DH23}, where a symbolic encoding
of the model is given. Their notion of abductive explanations is similar in spirit to the one used in this paper,
except all possible values of pixels outside of the explanations are considered. 
The problem setting is very different from ours, considering the model as a logic formula. 
In contrast, our approach is black box and is agnostic to the internal structure of the classifier, nor does it try to represent its decision process as a logic expression.

\section{Background on Actual Causality}\label{sec:cause} 

\begin{figure*}[t]
    \centering
    \begin{subfigure}{0.2\textwidth}
        \includegraphics[height=90pt, width=90pt]{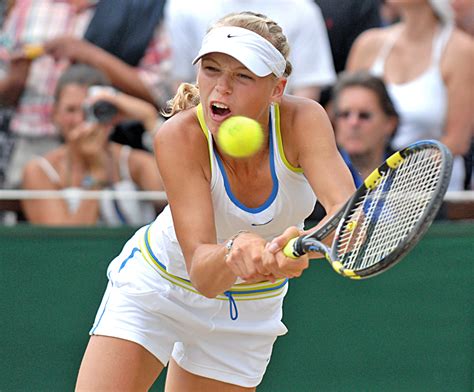}
        \caption{Class $752$: racket}
    \end{subfigure}
    \hfill
     \begin{subfigure}{0.2\textwidth}
        \includegraphics[height=90pt, width=90pt]{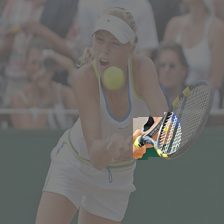}
        \caption{}\label{subfig:9}
    \end{subfigure}
    \hfill
    \begin{subfigure}{0.2\textwidth}
        \includegraphics[height=90pt, width=90pt]{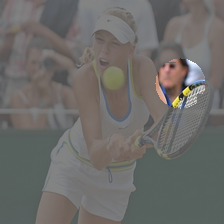}
        \caption{}\label{subfig:6}
    \end{subfigure}
    \hfill
    \begin{subfigure}{0.2\textwidth}
        \includegraphics[height=90pt, width=90pt]{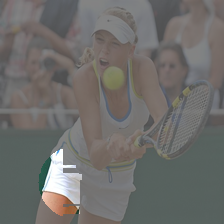}
        \caption{}\label{subfig:1}
    \end{subfigure}
        \hfill
    \vspace{3mm}
    \caption{Imagenet class $752$: racket, according to ResNet50.~\Cref{subfig:1,subfig:6,subfig:9} show $3$ minimal, sufficient explanations for class $752$.
    Only~\Cref{subfig:6,subfig:9} contains part of the racket. The tennis players shorts are also classified as racket, with a higher confidence than either~\Cref{subfig:9} or~\Cref{subfig:6}.}\label{fig:misclassification}
    \vspace{2mm}
\end{figure*}

In this section we briefly review the definitions of causality and causal
models introduced by Halpern and Pearl \cite{HP01b} and
relevant definitions of causes and explanations in image
classification~\cite{CH24}. The reader is referred
to~\cite{Hal19} for further reading.

We assume that the world is described in terms of 
variables and their values.  
Some variables may have a causal influence on others. This
influence is modeled by a set of {\em structural equations}.
It is conceptually useful to split the variables into two
sets: the {\em exogenous\/} variables, whose values are
determined by 
factors outside the model, and the {\em endogenous\/} variables, whose values are ultimately determined by
the exogenous variables.  
The structural equations describe how these values are 
determined.

Formally, a \emph{causal model} $M$
is a pair $(\Scal, \cF)$, where $\Scal$ is a \emph{signature}, which explicitly
lists the endogenous and exogenous variables  and characterizes
their possible values, and $\cF$ defines a set of \emph{(modifiable)
structural equations}, relating the values of the variables.  
A signature $\Scal$ is a tuple $(\U,\V,\R)$, where $\U$ is a set of
exogenous variables, $\V$ is a set 
of endogenous variables, and $\R$ associates with every variable $Y \in 
\U \cup \V$ a nonempty set $\R(Y)$ of possible values for 
$Y$ (i.e., the set of values over which $Y$ {\em ranges}).  
For simplicity, we assume here that $\V$ is finite, as is $\R(Y)$ for
every endogenous variable $Y \in \V$.
$\cF$ associates with each endogenous variable $X \in \V$ a
function denoted $F_X$
(i.e., $F_X = \cF(X)$)
such that $F_X: (\times_{U \in \U} \R(U))
\times (\times_{Y \in \V - \{X\}} \R(Y)) \rightarrow \R(X)$.

The structural equations define what happens in the presence of external
interventions. 
Setting the value of some variable $X$ to $x$ in a causal
model $M = (\Scal,\cF)$ results in a new causal model, denoted
$M_{X\gets x}$, which is identical to $M$, except that the
equation for $X$ in $\cF$ is replaced by $X = x$.

\emph{Probabilistic causal models}
are pairs $(M,\Pr)$, 
where $M$ is a causal model and $\Pr$ is a
probability on the contexts. A causal model  $M$ is \emph{recursive} (or \emph{acyclic})
if its causal graph is acyclic.
If $M$ is an acyclic  causal model,
then given a \emph{context}, that is, a setting $\vec{u}$ for the
exogenous variables in $\U$, the values of all the other variables are
determined.
In this paper we restrict to recursive models.

We call a pair $(M,\vec{u})$ consisting of a causal model $M$ and a
context $\vec{u}$ a \emph{(causal) setting}.
A causal formula $\psi$ is true or false in a setting.
We write $(M,\vec{u}) \sat \psi$  if
the causal formula $\psi$ is true in
the setting $(M,\vec{u})$.
The $\sat$ relation is defined inductively.
$(M,\vec{u}) \sat X = x$ if
the variable $X$ has value $x$
in the unique solution
to the equations in
$M$ in context $\vec{u}$.
Finally, 
$(M,\vec{u}) \sat [\vec{Y} \gets \vec{y}]\varphi$ if 
$(M_{\vec{Y} = \vec{y}},\vec{u}) \sat \varphi$,
where $M_{\vec{Y}\gets \vec{y}}$ is the causal model that is identical
to $M$, except that the 
variables in $\vec{Y}$ are set to $Y = y$
for each $Y \in \vec{Y}$ and its corresponding 
value $y \in \vec{y}$.

A standard use of causal models is to define \emph{actual causation}: that is, 
what it means for some particular event that occurred to cause 
another particular event. 
There have been a number of definitions of actual causation given
for acyclic models
(e.g., \cite{beckers21c,GW07,Hall07,HP01b,Hal19,hitchcock:99,Hitchcock07,Weslake11,Woodward03}).
In this paper, we focus on what has become known as the \emph{modified} 
Halpern-Pearl definition and some related definitions introduced
by Halpern ~\citeyear{Hal19}. 
We briefly review the relevant definitions below.
The events that can be causes are arbitrary conjunctions of primitive
events (formulas of the form $X=x$); 
the events that can be caused are primitive events, denoting the output of the model.  

\dfn\label{def:AC}[Actual cause]
$\vec{X} = \vec{x}$ is 
an \emph{actual cause} of $\varphi$ in $(M,\vec{u})$ if the
following three conditions hold: 
\begin{description}
\item[{\rm AC1.}]\label{ac1} $(M,\vec{u}) \models (\vec{X} = \vec{x})$ and $(M,\vec{u}) \models \varphi$. 
\item[{\rm AC2.}] There is a
  a setting $\vec{x}'$ of the variables in $\vec{X}$, a 
(possibly empty)  set $\vec{W}$ of variables in $\V - \vec{X}'$,
and a setting $\vec{w}$ of the variables in $\vec{W}$
such that $(M,\vec{u}) \models \vec{W} = \vec{w}$ and
$(M,\vec{u}) \models [\vec{X} \gets \vec{x}', \vec{W} \gets
    \vec{w}]\neg{\varphi}$, and moreover
\item[{\rm AC3.}] \label{ac3}\index{AC3}  
  $\vec{X}$ is minimal; there is no strict subset $\vec{X}'$ of
  $\vec{X}$ such that $\vec{X}' = \vec{x}''$ can replace $\vec{X} =
  \vec{x}'$ in 
  AC2, where $\vec{x}''$ is the restriction of
$\vec{x}'$ to the variables in $\vec{X}'$.
\end{description}
\edfn
In the special case that $\vec{W} = \emptyset$, we get the 
but-for definition.

The notion of explanation, taken from~\cite{Hal19}, is relative to a set of contexts.
\dfn\label{def:EX}[Explanation]
$\vec{X} = \vec{x}$ is 
an \emph{explanation} of $\varphi$ relative to a set $\K$ of contexts 
in a causal model $M$ if the following conditions hold:  
\begin{description}
\item[{\rm EX1a.}]  
If $\vec{u} \in \K$ and $(M,\vec{u}) \models (\vec{X} = \vec{x})
  \wedge \varphi$, then there exists a conjunct $X=x$ of $\vec{X} =
  \vec{x}$ and a (possibly empty) conjunction $\vec{Y} = \vec{y}$ such
  that $X=x \wedge \vec{Y} = \vec{y}$ is an actual cause of $\varphi$
  in $(M,\vec{u})$. 
\item[{\rm EX1b.}] $(M,\vec{u}') \models [\vec{X} = \vec{x}]\varphi$  for all
  contexts $\vec{
    u}' \in \K$. 
\item[{\rm EX2.}] $\vec{X}$ is minimal; there is no
  strict subset $\vec{X}'$ of $\vec{X}$ such that $\vec{X}' =
  \vec{x}'$ satisfies EX1,  
where $\vec{x}'$ is the restriction of $\vec{x}$ to the variables in $\vec{X}'$. (This is SC4).
\item[{\rm EX3.}] \label{ex3} $(M,u) \sat \vec{X} = \vec{x} \wedge
  \varphi$ for some $u \in \K$.
\end{description}
\edfn

\section{Theoretical Foundations of \multideepcover}\label{sec:theory}

We view an image classifier (\emph{e.g.} a neural network) as a
probabilistic causal model. 
Specifically, 
the endogenous variables are taken to be the set $\vec{V}$ of pixels
that the image classifier gets as input, together with an output
variable that we call $O$.  The variable $V_i \in \vec{V}$ describes
the color and 
intensity of pixel $i$; its value is determined by the exogenous variables.  
The equation for $O$ determines the output of the
model as a function of the pixel values.
Thus, the causal network has depth $2$, with the exogenous variables
determining the feature 
variables,
and the feature variables determining
the output variable.  
In this paper we also assume \emph{causal independence} between the feature
variables in $\vec{V}$, the set of pixels of the input image.

Pixel independence is a common assumption in explainability tools.
This is a non-trivial assumption, 
and it might seem far-fetched, especially if we consider images capturing
real objects: if a group of pixels captures, say, a cat's ear, then a
group of pixels near it should capture a cat's eye. However, we argue that
it is, in fact, accurate on images. Indeed, consider a 
partially obscured image, obtained by
overlaying random color patches over an Imagenet image. Such an image is perfectly valid, as obscuring a part of the input image either by introducing an artificial object 
or by positioning a real object in front of the primary subject of the classification 
does not lead to any change in unobscured pixels. For example, obscuring a cat's ear does not lead to
any change in an (unobscured) group of pixels, capturing a cat's eye. 
Thus, pixel independence holds on general images.

We refer the reader
to \cite{CH24} for a more in-depth discussion of causal independence between pixel values in
image classification.  
We note that the causal independence assumption is not true in
other types of inputs, such as tabular or spectral data. For those types of inputs, assuming independence
is clearly an approximation and might lead to inaccurate results.
In this paper, however, we focus
on images, where causal independence between pixels holds.

Moreover, as the causal network is of depth $2$, all parents
of the output variable $O$ are contained in $\vec{V}$.  
Given these assumptions, the probability on contexts directly
corresponds to the probability on seeing various images (which the model
presumably learns during training).

Given an input image $I$, the set of contexts $\K$ that we consider for an explanation
is the set $\K_I$ obtained by \emph{all partial occlusions of $I$}, where an occlusion sets
a part $\vec{Y}$ of the image to a predefined masking color $\vec{y}$. The probability
distribution over $\K_I$ is assumed to be uniform.

Under the assumptions above, the following definition is equivalent to
\Cref{def:EX}, as shown in~\cite{CH24}.
\begin{definition}[Explanation for image classification~\cite{CKS21}]\label{defn:simple-exp}
An explanation is a minimal subset of pixels of a
given input image that is sufficient for the model $\mathcal{N}$ to classify the image,
where ``sufficient'' is defined as containing only this subset of pixels
from the original image, with the other pixels set to the masking color.
\end{definition}

In the complexity discussion that follows, we discuss the complexity of decision problems matching the
function problems of computing explanations. If the decision problem is $A$-complete, for some complexity
class $A$, then the matching function problem is FP$^{A[\log{n}]}$-complete, assuming monotonicity of
the function problem (see \cite{CH04} for a more in-depth discussion of decision vs function problems in actual
causality). Formally, the decision problem of explanation is, given a
model, a context, an output $\varphi$, and a candidate explanation, to decide whether the candidate is indeed an explanation
for $\varphi$ in the given model and context.

\cite{CKS21} observe that the precise computation of an explanation in our
setting is intractable, as the problem is equivalent to an earlier
definition of explanations in binary causal models, which is
DP-complete~\cite{EL04}.\footnote{DP is the class of languages that are
an intersection of a language in NP and a language in co-NP and contains, in
particular, the languages of unique solutions to NP-complete
problems~\cite{Pap84}.}

The following lemma shows that computing a second (or any subsequent)
explanation is not easier than computing the first one.
\begin{lemma}\label{lemma:secondexp}
Given an input image and one explanation, the decision problem of a different explanation is DP-complete.
\end{lemma}
\begin{proof}
Membership in DP is straightforward and follows from the membership in DP of the problem of deciding an explanation: adding a constraint that 
the output should be different from a given explanation does not increase the complexity class of the decision problem.
For hardness in DP, we show a reduction from the decision problem of computing an explanation. Given an image $I$ classified by
a neural network (a black-box model) ${\cal N}$ as ${\cal N}(I)$, we define $\varphi$ as ``the output is an explanation of ${\cal N}(I)$
or it is exactly $I$''. The reduction is a tuple $\langle I, {\cal N}(I), I \rangle$, viewed as an input to the different explanation
problem, that is, $\langle$ input image, its label, a given explanation $\rangle$. 
The second $I$ renders $\varphi$ true. Then, an output that renders $\varphi$ true and is
different from $I$ is an explanation of ${\cal N}(I)$ for $I$, completing the reduction.
\end{proof}

\citet{chockler2024} use a greedy approach to constructing approximate explanations, based on scanning the ranked list of pixels $\mathit{pixel\_ranking}$ (\Cref{fig:algorithm}). 
The pixels are ranked in the order of their approximate \emph{degree of responsibility} for the classification,
where responsibility is a quantitative measure of causality and, roughly speaking, measures the amount of causal
influence on the classification. Formally, 
the \emph{degree of responsibility} of a variable $X=x$ for the value of $\varphi$ is $1/k$, where $k$ is the size of a smallest
set of variables $\vec{X}$ s.t. $X\in\vec{X}$ and has the value $x$ in $\vec{x}$, and $\vec{X}=\vec{x}$ is an actual cause of
$\varphi$ according to \Cref{def:AC}~\cite{CH04}. The degree of responsibility
is always between $0$ and $1$, with higher values indicating a stronger
causal influence. 

The precise computation of degrees of responsibility of pixels of $I$ is intractable; it's decision problem 
is NP-complete under our simplifying assumptions~\cite{CH04}. 
Hence, the ranking in~\cite{chockler2024} is based on the approximate degree of responsibility, which is computed by partitioning the set in iterations and computing the degrees of responsibility for each partition, while discarding low-responsibility elements (see \Cref{sec:algorithm} for details). The greedy explanation extraction
adds pixels from the sorted ranked list until the original classification is obtained.

However, reducing the complexity of computing one explanation does not reduce the complexity of computing many explanations, 
as the number of explanations for a given image can be very high: 
\begin{lemma}\label{lemma:number}
The number of explanations for an input image is bounded from above by $\binom{n}{\lfloor n/2 \rfloor}$, and this bound is tight.
\end{lemma}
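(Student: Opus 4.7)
The plan is to recognise that the set of explanations for a fixed image forms an \emph{antichain} under set inclusion, and then to invoke Sperner's theorem to obtain the upper bound, exhibiting a threshold-style classifier to show tightness.

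First, I would argue the upper bound. Let $E_1$ and $E_2$ be two distinct explanations, viewed as subsets of the $n$ input pixels. By \Cref{defn:simple-exp} each $E_i$ is \emph{minimal} among subsets that are sufficient for the classification. If we had $E_1 \subsetneq E_2$, then $E_2$ would contain a strictly smaller sufficient subset, contradicting the minimality of $E_2$; symmetrically $E_2 \not\subsetneq E_1$. Hence the collection of explanations is an antichain in the Boolean lattice on $n$ elements, and Sperner's theorem gives the bound $\binom{n}{\lfloor n/2 \rfloor}$.

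For tightness I would exhibit a black-box classifier realising the bound. Consider an input of $n$ pixels and a network $\mathcal{N}$ that, given a masked version of the image, outputs label $o$ if and only if the number of unmasked pixels is at least $\lfloor n/2 \rfloor$ (a simple threshold function, which is clearly realisable by a small NN). On the fully unmasked image this classifier outputs $o$. A subset $S$ of pixels is sufficient (in the sense of \Cref{defn:simple-exp}) iff $|S|\ge \lfloor n/2 \rfloor$, so the minimal sufficient subsets are exactly the $\lfloor n/2 \rfloor$-subsets of pixels, of which there are precisely $\binom{n}{\lfloor n/2 \rfloor}$. This matches the upper bound.

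The only subtlety I expect is the reduction of the argument to the pure antichain setting: one has to be careful that ``minimal'' in \Cref{defn:simple-exp} really means inclusion-minimal, so that Sperner applies directly rather than, say, cardinality-minimal (which would change the count). Once that is pinned down, both directions are short; the tightness construction is an explicit threshold gate, so no delicate NN engineering is required.
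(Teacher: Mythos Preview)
Your proposal is correct and matches the paper's own proof almost exactly: the paper also invokes Sperner's theorem via the antichain property of minimal sufficient subsets and exhibits a threshold classifier (outputting the label iff at least $\lfloor n/2\rfloor$ pixels remain unmasked/green) on a uniform image to witness tightness. Your explicit remark that ``minimal'' must mean inclusion-minimal for Sperner to apply is a useful clarification that the paper leaves implicit.
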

\begin{proof}
Since an explanation of the classification of $x$ is a minimal subset of $x$ that is sufficient to result in the same classification, 
the number of explanations is characterised by \emph{Sperner's theorem}, which provides a bound for the number $S$ of largest possible families of finite sets,
none of which contain any other sets in the family~\cite{And87}.
By Sperner's theorem, $S \leq \binom{n}{\lfloor n/2 \rfloor}$, and the bound is reached when all subsets are of the size $\lfloor n/2 \rfloor$. 
The following example demonstrates an input on which this bound is reached.
Consider a binary classifier ${\cal N}$ that determines whether an input image of size $n$ has at least $\lfloor n/2 \rfloor$ 
green-colored pixels and an input image $I$ that is completely green. Then, each explanation is of size 
$\lfloor n/2 \rfloor$, and there are $\binom{n}{\lfloor n/2 \rfloor}$ explanations.
\end{proof}

Finally, we note that given a set of explanations (sets of pixels) and an overlap bound, deciding a subset 
of a given number of explanations in which elements overlap for no more than the bound is NP-hard even assuming that constructing and training a binary classifier is $O(1)$, by reduction from the independent set problem, which is known to be NP-complete.
Indeed, let $(G = \langle V,E \rangle, n)$ be an input to the independent set problem, deciding whether $G$ contains
an independent set of nodes of size $n$. Then, $G$ has an independent set of size $n$ if and only if there exist $n$ disjoint explanations
of $G$ having a connected component of size $1$ (note that finding a connected component of size $1$ is polynomial in the size of $G$).


\section{The \multideepcover Algorithm}
\label{sec:algorithm}

\begin{figure*}
\centering
\vspace{10mm}
\input{alg}
\caption{A schematic depiction of \multideepcover, returning a set of explanations $\mathcal{E}$ for a given input image. Its components: \ding{172} \emph{ranking}  generates a responsibility landscape of pixels; \ding{173} \emph{search} launches $x$ \emph{searchlight} searches over the landscape; \ding{174} \emph{drain} minimizes the explanations founds in \ding{173}; 
\ding{175} \emph{separate} produces a maximal subset $\mathcal{E}$ from the output of \ding{174}, with the given overlap bound.}
\vspace{3mm}
\label{fig:algorithm}
\end{figure*}

In this section we present our algorithm for computing multiple explanations of an image. As shown in \Cref{sec:theory}, the problem is intractable,
motivating the need for efficient and accurate approximation algorithms. Due to the lack of space, some details and algorithms have been moved to the supplementary material.

The concept of a \emph{superpixel} is used in a number of different explanation tools; \sag splits the image into a fixed grid of $7\times7$ superpixels.
Dividing the image in this way greatly reduces the computational cost of
searching for explanations. The rigidity of the grid, however, may lead to 
missing some explanations. \multideepcover does not need a rigid grid:
it starts with large, randomly selected superpixels which it iteratively refines. Furthermore, \multideepcover repeats this procedure with different starting superpixels to reduce the influence of particular random choices. Responsibility landscapes from individual iterations are combined to produce a detailed responsibility landscape~\Cref{subfig:peacock_rm}. As more iterations are added, the landscape becomes smoother. We separate explanations from this landscape using~\Cref{algo:searchlight}. It is the existence of this landscape which gives \multideepcover its edge: it provides a continuous search space over the entire image that is not confined to a discrete grid.

The high-level structure of the algorithm is presented in \Cref{fig:algorithm}, and the pseudo-code is in \Cref{algo:multiple_explanations}. We discuss each component in more detail below.

\begin{algorithm}
  \caption{$\mathit{\multideepcover}(I, \mathcal{N}, r, n, \delta, s, p, q)$}
  \label{algo:multiple_explanations}
  \begin{flushleft}
    \textbf{INPUT:}\,\, an image $I$, a model $\mathcal{N}$, a searchlight radius $r$, the maximum number of explanations $n$, maximum overlap between explanations $\delta$, number of searchlights $s$, searchlight expansions $p$, expansion coefficient $q$\\
    \textbf{OUTPUT:}\,\, a set of up to $n$ different explanations $\mathcal{E}$
  \end{flushleft}
  \begin{algorithmic}[1]
    \STATE $\mathcal{E} \leftarrow\emptyset$
    \STATE $l \leftarrow \mathcal{N}(x)$
    \STATE $\mathcal{S} \leftarrow \mathit{CAUSAL\_RANK}(I, \mathcal{N}, l)$\label{line:rank}
    \FOR{$i$ in $0\ldots s-1$} 
        \STATE $E_i \leftarrow \mathit{searchlight}(I,\mathcal{N}, l, \mathcal{S}, r, n, p, q)$\label{line:fl}
        \STATE $E_i \leftarrow \mathit{minimize}(I, E_i, \mathcal{N}, \mathcal{S})$\label{line:min}
        \STATE $\mathcal{E} \leftarrow \mathcal{E} \cup E_i$
     \ENDFOR 
     \STATE $\mathcal{E} \leftarrow \mathit{separate}(\mathcal{E}, \delta)$
     \STATE \textbf{return} $\mathcal{E}$
  \end{algorithmic}
\end{algorithm}

The $\mathit{CAUSAL\_RANK}$ procedure in Line 3 of~\Cref{algo:multiple_explanations} constructs a $\mathit{pixel\_ranking}$, which is a ranking of the pixels of the input image~$I$ by their causal responsibility. We use the algorithm in~\cite{chockler2024} as the basis for producing this landscape.
The number of required explanations is given as an input parameter to the procedure, as the total number of explanations can be exponential (see \Cref{lemma:number}).

The $\mathit{Search}$ procedure called in Line~\ref{line:fl} is described in~\Cref{algo:searchlight}. 
It replaces the greedy explanation generation in~\rex with a modified stochastic hill climb. In contrast to most hill-climb-based algorithms that look for the global maximum, we specifically search for \emph{local maxima}, on the assumption that these are likely to correspond to explanations. The function $\mathit{initialize}$ in Line~\ref{line:init} creates a `searchlight', $\mathcal{F}$, of radius $r$ at a random position over the image $I$. The intuition is that only the pixels under $\mathcal{F}$ are exposed to the model (all other pixels being set to a baseline value). 

If $\mathcal{F}$ contains an explanation (\ie the exposed pixels already have the required classification), we invoke the \emph{minimize} procedure to remove redundant pixels (Line~6). The reason for potential redundancy is that $\mathcal{F}$ may have included too many pixels by virtue of its circular shape. 
The \emph{minimize} procedure consists of obtaining the responsibility ranking of all pixels inside $\mathcal{F}$, setting all those pixels outside $\mathcal{F}$ to responsibility $0$, and using the greedy algorithm from~\cite{chockler2024} to add in pixels based on their responsibility. The stopping condition is the desired classification. This process is guaranteed to succeed because $\mathcal{F}$ already contains \emph{at least} the sufficient pixels. Pixels with $0$ responsibility are never added to an explanation.

A randomly placed searchlight might not contain an explanation, either due to its size or to its location. 
Rather than changing the location of $\mathcal{F}$ immediately, we first increase its size. 
If we start too small, we might miss all explanations (that is, $\mathcal{F}$ would never be large enough to capture all the necessary pixels). 
The number of expansions and size of expansion are controlled by hyperparameters.

If the increase in the size of $\mathcal{F}$ still does not result in an explanation, the algorithm changes its location and resets
to the original size. This step is guided by an objective function; by default, the objective function is the mean of the responsibility of the pixels  contained in $\mathcal{F}$. Thus, $\mathcal{F}$ moves towards the areas with a higher average responsibility; 
these areas are more likely to contain an explanation.

\begin{algorithm}[t]
    \caption{$\mathit{searchlight}\,(\mathit{I}, \mathcal{N}, l, \mathcal{S}, \mathit{r}, n, p, q)$}
    \label{algo:searchlight}
    \begin{flushleft}
        \textbf{INPUT:}\,\, an image $I$, a model $\mathcal{N}$, a label $l$, a responsibility landscape $\mathcal{S}$, a searchlight radius $r$, number of steps $n$, number of expansions $p$, radius increase $q$\\
        \textbf{OUTPUT:}\,\, an explanation $E$
    \end{flushleft}     
        \begin{algorithmic}[1]
            \STATE $\mathcal{F} \leftarrow \mathit{initialize(r)}$\label{line:init}
            \STATE $\mathcal{E} \leftarrow \emptyset$
            \FOR{$i$ in $0\ldots n-1$}
                \FOR{$j$ in $0\ldots p-1$}
                    \STATE $l' \leftarrow \mathcal{N}(\mathcal{F}(I))$
                    \IF{$l = l'$}
                        \STATE $E \leftarrow \mathcal{F}(I)$
                        \STATE \textbf{return} $E$
                    \ELSE
                        \STATE $\mathcal{F} \leftarrow \mathit{expand\_radius(r * q)}$
                    \ENDIF    
                \ENDFOR 
                \STATE $\mathcal{F} \leftarrow \mathit{neighbor}$
            \ENDFOR
            \STATE \textbf{return} $\mathcal{E}$
        \end{algorithmic}
\end{algorithm}

\begin{algorithm}[t]
     \caption{$\mathit{separate}(\mathcal{E}, \delta)$}
      \label{algo:cleanup}
      \begin{flushleft}
         \textbf{INPUT:}\,\, a set of explanations $\mathcal{E}$, a permitted degree of overlap $\delta$ \\
         \textbf{OUTPUT:}\,\, a subset of explanations $\mathcal{E'} \subseteq \mathcal{E}$ with overlap at most $\delta$
         \begin{algorithmic}[1]
             \STATE $\mathit{all\_pairs} \leftarrow \mathcal{E} \times \mathcal{E}$\label{line:allpairs}
             \STATE $\mathit{bad\_pairs} \leftarrow \emptyset$ 
             \FOR{$\mathit{(p_i, p_j)}$ in $\mathit{all\_pairs}$}
                 \STATE $\mathit{SDC} \leftarrow \mathit{dice\_coefficient(p_i, p_j)}$
                 \IF {$SDC > \delta$}
                    \STATE $\mathit{bad\_pairs} \leftarrow \mathit{bad\_pairs} \cup (p_i, p_j)$
                  \ENDIF
             \ENDFOR

             \FOR{$\mathit{e} \in \mathcal{E}$}
                \IF {$\mathit{e}$ does not contain any $\mathit{bad\_pairs}$}
                    \STATE \textbf{return} $\mathit{e}$
                \ENDIF
             \ENDFOR
         \STATE \textbf{return} $\emptyset$
         \end{algorithmic}
     \end{flushleft}
\end{algorithm}

Finally, the $\mathit{separate}$ procedure (\Cref{algo:cleanup}) separates a subset of at most $n$
explanations that overlap on pixels up to the bound $\delta$. $\delta$ is a value between $0$ and $1$, 
where $0$ stands for ``no permitted overlap'', and $1$ means ``no overlap restrictions''.
As discussed in \Cref{sec:theory}, the exact solution is NP-hard. The \emph{separate} procedure uses a greedy heuristic based on 
the S\o{}rensen–Dice coefficient (SDC)~\cite{Dic45,Sor48},  typically used as a measure of similarity between samples. 
First, we create a list of all pairs in $\mathcal{E}$ which overlap by more that $\delta$. We then iterate backwards through the powerset of $\mathcal{E}, 2^{\mathcal{E}}$ (\ie starting from the complete $\mathcal{E}$ and not $\emptyset$) and stop at the first set $\mathit{e} \in 2^{\mathcal{E}}$ which does not contain one of the previously discovered `bad' pairs. As an added optimization, when iterating through $2^{\mathcal{E}}$, we order all subsets $\mathit{e}$ with the same cardinality by the total area of the contained explanations. Thus, the algorithm stops at the largest number of explanations with the smallest overall area and with overlap less that $\delta$.

\section{Experimental Results}\label{sec:experiments}

\begin{table*}[t]
    \begin{subtable}{.45\textwidth}\centering
         \begin{tabular}{r|r|r||r|r||r|r}
            \toprule\toprule
            \multirow{2}{2em}{No. Exp} & \multicolumn{6}{c}{Datasets} \\
            \cmidrule(lr){2-7} 
            \multirow{2}{2em}{} & \multicolumn{2}{c}{ImageNet1k} & \multicolumn{2}{c}{Voc} & \multicolumn{2}{c}{ECSSD} \\
            \cmidrule(lr){2-7} 
            & \small{\textsc{Mult}} & \small{\sag} & \small{\textsc{Mult}} & \small{\sag} & \small{\textsc{Mult}}  & \small{\sag} \\
            \midrule\midrule 
            1 & 2052  & 2757 & 931 & 1138 & 548 & 732 \\
            \midrule
            2 & 1216 & 733 & 376 & 223 & 315 & 185  \\
            3 & 489  & 246 & 125 & 64 & 106 & 53 \\
            4 & 135  & 103 & 15 & 13 & 25 & 15  \\
            5 & 30   & 42 & 2 & 5 & 3 & 9 \\
            6 & 1    & 18 & -- & 2 & 3 & 3   \\
            7 & --    & 10 & -- & 2 & -- & 1    \\
            8+ & --    & 14 & -- & 2 & -- & 2   \\
            \bottomrule\bottomrule
        \end{tabular}
        \caption{Default \sag and \multideepcover with at least $90\%$ of original confidence and for the top classification. More than $1$ explanation is good.}
        \label{tab:resnet50_filter}
        \end{subtable}%
        \hfill
    \begin{subtable}{.45\textwidth}\centering
         \begin{tabular}{r|r|r||r|r||r|r}
    \toprule\toprule
    \multirow{2}{2em}{No. Exp} & \multicolumn{6}{c}{Datasets} \\ 
    \cmidrule(lr){2-7} 
    \multirow{2}{2em}{} & \multicolumn{2}{c}{ImageNet1k} & \multicolumn{2}{c}{Voc} & \multicolumn{2}{c}{ECSSD} \\
    \cmidrule(lr){2-7} 
            & \small{\textsc{Mult}} & \small{\sag} & \small{\textsc{Mult}} & \small{\sag} & \small{\textsc{Mult}}  & \small{\sag} \\
     \midrule\midrule 
        1 & 333  & 517 & 291 & 263 & 158 & 112 \\
        \midrule
        2 & 688 & 602 & 423 & 257 & 248 & 150  \\
        3 & 1048  & 651 & 411 & 251 & 295 & 147 \\
        4 & 974  & 540 & 222 & 180 & 171 & 173  \\
        5 & 571   & 430 & 77 & 142 & 90 & 121 \\
        6 & 246    & 276 & 22 & 79 & 32 & 66   \\
        7 & 52    & 223 & 3 & 63 & 6 & 57    \\
        8+ & 11    & 684 & -- & 214 & -- & 174   \\
    \bottomrule\bottomrule
    \end{tabular}
    \caption{\multideepcover with default parameters; \sag with a probability threshold of $0.01$. 
    \sag produces more explanations, but they are not for the top classifications (see \Cref{tab:resnet50_no_filter_pos}),
    rendering this comparison non-informative.}
    \label{tab:resnet50_no_filter}
    \end{subtable}%
    \vspace{4mm}
    \caption{\sag and \multideepcover on ResNet50. \Cref{tab:resnet50_filter} shows \sag with default probability threshold of $0.9$. We force \multideepcover to use the same confidence threshold, while still producing the top classification. \multideepcover finds more explanations per image in general than \sag. \Cref{tab:resnet50_no_filter} shows the effect of removing \sag{}'s probability threshold, to bring it closer to \multideepcover{}'s behavior. \multideepcover still performs well, but \sag{}'s output is close to noise.}
    \vspace{5mm}
\end{table*}

\begin{table*}[ht]
    \begin{subtable}{.45\linewidth}\centering
        \begin{tabular}{r|r|r|r}
    \toprule\toprule
    \multirow{2}{4.8em}{Position} & \multicolumn{3}{c}{Number of \sag Explanations} \\
    \cmidrule(lr){2-4} 
    & ImageNet & VOC & ECSSD \\ 
    \midrule\midrule 
        0 & 3698  & 1221 & 860 \\
        1 & 181 & 166 & 111 \\
        2 & 28  & 150 & 22 \\
        3 & 13  & 7 & 5  \\
        4 & 2   & 4  & 1  \\
        5+ & 1   & 1  & 1  \\
    \midrule\midrule
     Total in position $0$ (\%) & $\mathbf{94\%}$ & $\mathbf{84\%}$ & $\mathbf{86\%}$ \\
    \bottomrule\bottomrule
    \end{tabular}
    \caption{Position in the output tensor for ResNet50 explanations as produced by \sag at $0.9$ probability threshold.}
    \label{tab:resnet50_filter_pos}
    \end{subtable}%
    \hfill
    \begin{subtable}{.45\linewidth}\centering
       \begin{tabular}{r|r|r|r}
    \toprule\toprule
    \multirow{2}{4.8em}{Position} & \multicolumn{3}{c}{Number of \sag Explanations} \\
    \cmidrule(lr){2-4} 
    & ImageNet & VOC & ECSSD \\ 
    \midrule\midrule 
        0 & 94  & 20 & 13 \\
        1 & 69 & 33  & 17 \\
        2 & 51  & 20 & 11 \\
        3 & 63  & 17 & 8  \\
        4 & 82   & 28  & 11  \\
        5+ & 3565   & 1331 & 940 \\
    \midrule\midrule
    Total in position $0$ (\%) & $\mathbf{2\%}$ & $\mathbf{1\%}$ & $\mathbf{1\%}$ \\
    \bottomrule\bottomrule
    \end{tabular}
    \caption{Position in the output tensor for ResNet50 explanations as produced by \sag at $0$ probability threshold.}
    \label{tab:resnet50_no_filter_pos}
    \end{subtable}
    \vspace{5mm}
    \caption{Positions of \sag{}'s explanations in the output of a ResNet50 model. The tables illustrate \sag's strong
    dependence on the probability threshold parameter: when it is $0.9$, the majority of \sag's explanations (though not all)
    are for the top classification, as shown in \Cref{tab:resnet50_filter_pos}; when the probability threshold is close to $0$, 
    \sag's output is for classifications further down on the list, resulting in nonsensical explanations. For values between
    $0.9$ and $0$ \sag's output is between these two extremes.}\label{tab-sag-param}
    \vspace{5mm}
\end{table*}

\paragraph*{Implementation}

We implemented~\Cref{algo:multiple_explanations}
in the tool \multideepcover for generating multiple causal explanations. 
Given a responsibility landscape, by default, \multideepcover attempts to find $10$ explanations (a parameter).
In practice, it is computationally inexpensive to find more explanations, but our experimental results demonstrate
that images with more than $6$ explanations are rare ($\approx1\%$ of images).

\paragraph*{Comparison to \sag}
As discussed earlier, \sag{}'s definition of explanation is sufficiently different as to make exact comparison difficult.
\sag uses a beam search over a $7\times 7$ grid to discover multiple explanations. 
This strategy results in the search space of size $2^{49}$
(the number of subsets of regions of the \sag grid), and \sag attempts to solve the
exponential explosion problem by reducing the search space \emph{at random}. 

Furthermore, \sag accepts a region as an explanation if the confidence of the original label, $l$, on this region is greater than a confidence bound, regardless of its position in the model's output tensor. That is, \sag might output an explanation to an entirely different label than
the top classification of the input.
The confidence bound of \sag is based on a hyperparameter `probability threshold' and confidence on the original image.
This is in contrast with \multideepcover's much stricter definition of explanation, where a subset of pixels constitute an explanation is they alone are classified $l$ by the model, where $l$ is the top classification, regardless of the model's confidence. 
In order to perform a fair comparison, we evaluate both tools twice, each time with the default settings of one of them. 
Namely, we run \sag with default settings (specifically, with the probability threshold $0.9$) 
and compare it against \multideepcover where we set a minimum confidence threshold for an explanation at $0.9$ as well. 
Note that is still not an entirely fair comparison, as \multideepcover always explains the top classification, contrary to \sag. 
We also perform the comparative evaluation with setting \sag{}'s probability threshold to $0$\footnote{In reality, we found setting it to $0.0$ precisely resulted in no explanations, so for our experiment we used the value $0.01$} 
and leaving \multideepcover{}'s minimum confidence threshold at its default value of $0$. 

We set the maximum number of explanations in both tools to $10$. \sag requires the user to set the maximal allowed overlap in grid squares, with suggested values of $0$, $1$, or $2$.
\multideepcover does not measure overlap in blocks, as its responsibility landscape is continuous (or, rather, discrete at the level of a single
pixel). For a fair comparison, we limit both tools to non-overlapping explanations.
Apart from these changes, the experiments are performed on \multideepcover and \sag with default settings. 

\paragraph*{Datasets and Models}
For our experiments, we use $3$ different models from \textsc{torchvision} with default weights. The models are `resnet50', `convnext\_large' and `vit\_b\_32'. We use $3$ different publicly available datasets: ImageNet-mini validation\footnote{https://www.kaggle.com/datasets/ifigotin/imagenetmini-1000}, Pascal VOC2012~\cite{pascal-voc-2012} and ECSSD~\cite{ecssd}.

\paragraph*{Experimental Results}
A natural and quantifiable performance measure for multiple explanations is the number of significantly different explanations produced for each image. We also consider, for \sag, the position of the explanation in the model output (for \multideepcover, this position is always $0$,
as \multideepcover always explains the actual (top) classification). We also consider the size of explanations: in general, a good explanation should be close to minimal, \ie having as few extraneous pixels as possible. This has an important bearing on multiple explanations if we do not have overlap: one cannot fit many large explanations in a standard-size image.

The experiment was performed on a 64-core machine running Ubuntu 20.04.6 with a 48GB RAM and several Nvidia A40 GPUs. For the sake of space, we present complete results for ResNet50 in the paper and other results in the supplementary material. The results for the other models follow the same pattern as for ResNet50. The `vit\_b\_32' model accepts a lower number of multiple explanations for both tools (see supplementary material). 

\Cref{tab:resnet50_filter} shows the comparison of \sag with default parameters against \multideepcover with a confidence threshold of $0.9$. \Cref{tab:resnet50_filter_pos} shows the positions of \sag{}'s explanations in the model output. It seems unreasonable to accept a set of pixels explaining a classification at the $6^{th}$ position as an explanation for the top classification.

At threshold $0.9$, \sag{}'s output mostly (though not always) refers to the top classification, 
and \multideepcover finds more multiple explanations in general. \sag's dependence on the probability threshold parameter 
is illustrated in
\Cref{tab-sag-param}. With the threshold close to $0$, one of the \sag's
explanations was in position $831$ out of the possible thousand positions for an input in ImageNet. 


The average size of an explanation for \multideepcover is $\approx 8\%$ of the input image across the $3$ datasets. In contrast, the average size of \sag's explanations when run with $0.9$ threshold 
(hence producing explanations mostly for the top classification) is $\approx 14\%$ of the input image, showing that
\multideepcover produces much tighter explanations.


\multideepcover, unlike \sag, is consistent in its explanations: small and the top classification, regardless of probability threshold. If we increase the probability threshold for \multideepcover, the explanations become a little larger. We do not have the problem seen by~\cite{Jiang_2024_CVPR}: a plethora of `explanations' with little or no explanatory power. It becomes much clearer that the `compositionality' of the different models is significantly different. In particular, the decision process of
ResNet50 is similar to the one of ConvNext, and both are significantly different from that of the ViT model.

\paragraph*{Timings} \multideepcover takes an average of $15$ seconds per image on our machine, for $20$ iterations of the main algorithm. \sag, with the default probability threshold $0.9$, takes on average $10$ seconds; 
with a threshold of $0.01$, \sag is faster, at $2$ to $3$ seconds, but produces uninformative results.
\section{Conclusions}
\label{sec:conclusions}

We have introduced \multideepcover, a novel explanation-discovery algorithm based on the responsibility landscape constructed in the ranking procedure
and a ``spotlight'' search, ensuring different spatial locations for explanations. 

\newpage
\bibliography{all}

\begin{thebibliography}{40}
\providecommand{\natexlab}[1]{#1}
\providecommand{\url}[1]{\texttt{#1}}
\expandafter\ifx\csname urlstyle\endcsname\relax
  \providecommand{\doi}[1]{doi: #1}\else
  \providecommand{\doi}{doi: \begingroup \urlstyle{rm}\Url}\fi

\bibitem[Anderson(1987)]{And87}
I.~Anderson.
\newblock \emph{Combinatorics of Finite Sets}.
\newblock Oxford University Press, 1987.

\bibitem[Bach et~al.(2015)Bach, Binder, Montavon, Klauschen, M{\"u}ller, and
  Samek]{bach2015pixel}
S.~Bach, A.~Binder, G.~Montavon, F.~Klauschen, K.-R. M{\"u}ller, and W.~Samek.
\newblock On pixel-wise explanations for non-linear classifier decisions by
  layer-wise relevance propagation.
\newblock \emph{PLOS One}, 10\penalty0 (7), 2015.

\bibitem[Beckers(2021)]{beckers21c}
S.~Beckers.
\newblock Causal sufficiency and actual causation.
\newblock \emph{Journal of Philosophical Logic}, 50:\penalty0 1341--1374, 2021.

\bibitem[Biradar et~al.(2024)Biradar, Izza, Lobo, Viswanathan, and
  Zick]{BILVZ24}
G.~Biradar, Y.~Izza, E.~A. Lobo, V.~Viswanathan, and Y.~Zick.
\newblock Axiomatic aggregations of abductive explanations.
\newblock In \emph{Thirty-Eighth Conference on Artificial Intelligence,
  {AAAI}}, pages 11096--11104. {AAAI} Press, 2024.

\bibitem[Chen et~al.(2018)Chen, Song, Wainwright, and Jordan]{chen2018learning}
J.~Chen, L.~Song, M.~Wainwright, and M.~Jordan.
\newblock Learning to explain: An information-theoretic perspective on model
  interpretation.
\newblock In \emph{International Conference on Machine Learning (ICML)},
  volume~80, pages 882--891. {PMLR}, 2018.

\bibitem[Chockler and Halpern(2004)]{CH04}
H.~Chockler and J.~Y. Halpern.
\newblock Responsibility and blame: {A} structural-model approach.
\newblock \emph{J. Artif. Intell. Res.}, 22:\penalty0 93--115, 2004.

\bibitem[Chockler and Halpern(2024)]{CH24}
H.~Chockler and J.~Y. Halpern.
\newblock Explaining image classifiers, 2024.

\bibitem[Chockler et~al.(2021)Chockler, Kroening, and Sun]{CKS21}
H.~Chockler, D.~Kroening, and Y.~Sun.
\newblock Explanations for occluded images.
\newblock In \emph{{IEEE/CVF} International Conference on Computer Vision,
  {ICCV}}, pages 1214--1223. {IEEE}, 2021.

\bibitem[Chockler et~al.(2024)Chockler, Kelly, Kroening, and Sun]{chockler2024}
H.~Chockler, D.~A. Kelly, D.~Kroening, and Y.~Sun.
\newblock Causal explanations for image classifiers, 2024.
\newblock URL \url{https://arxiv.org/abs/2411.08875}.

\bibitem[Darwiche and Hirth(2023)]{DH23}
A.~Darwiche and A.~Hirth.
\newblock On the (complete) reasons behind decisions.
\newblock \emph{J. Log. Lang. Inf.}, 32\penalty0 (1):\penalty0 63--88, 2023.

\bibitem[Datta et~al.(2016)Datta, Sen, and Zick]{datta2016algorithmic}
A.~Datta, S.~Sen, and Y.~Zick.
\newblock Algorithmic transparency via quantitative input influence: Theory and
  experiments with learning systems.
\newblock In \emph{Security and Privacy (S\&P)}, pages 598--617. IEEE, 2016.

\bibitem[Dice(1945)]{Dic45}
L.~R. Dice.
\newblock Measures of the amount of ecologic association between species.
\newblock \emph{Ecology}, 26:\penalty0 297–--302, 1945.

\bibitem[Eiter and Lukasiewicz(2004)]{EL04}
T.~Eiter and T.~Lukasiewicz.
\newblock Complexity results for explanations in the structural-model approach.
\newblock \emph{Artif. Intell.}, 154\penalty0 (1-2):\penalty0 145--198, 2004.

\bibitem[Everingham et~al.(2012)Everingham, Van~Gool, Williams, Winn, and
  Zisserman]{pascal-voc-2012}
M.~Everingham, L.~Van~Gool, C.~K.~I. Williams, J.~Winn, and A.~Zisserman.
\newblock The {PASCAL} {V}isual {O}bject {C}lasses {C}hallenge 2012 {(VOC2012)}
  {R}esults.
\newblock
  http://www.pascal-network.org/challenges/VOC/voc2012/workshop/index.html,
  2012.

\bibitem[Fong et~al.(2019)Fong, Patrick, and Vedaldi]{fong2019understanding}
R.~Fong, M.~Patrick, and A.~Vedaldi.
\newblock Understanding deep networks via extremal perturbations and smooth
  masks.
\newblock In \emph{International Conference on Computer Vision (ICCV)}, pages
  2950--2958. IEEE, 2019.

\bibitem[Glymour and Wimberly(2007)]{GW07}
C.~Glymour and F.~Wimberly.
\newblock Actual causes and thought experiments.
\newblock In J.~Campbell, M.~O'Rourke, and H.~Silverstein, editors,
  \emph{Causation and Explanation}, pages 43--67. MIT Press, Cambridge, MA,
  2007.

\bibitem[Hall(2007)]{Hall07}
N.~Hall.
\newblock Structural equations and causation.
\newblock \emph{Philosophical Studies}, 132:\penalty0 109--136, 2007.

\bibitem[Halpern(2019)]{Hal19}
J.~Y. Halpern.
\newblock \emph{Actual Causality}.
\newblock The MIT Press, 2019.

\bibitem[Halpern and Pearl(2005)]{HP01b}
J.~Y. Halpern and J.~Pearl.
\newblock Causes and explanations: a structural-model approach. {P}art {I}:
  causes.
\newblock \emph{British Journal for Philosophy of Science}, 56\penalty0
  (4):\penalty0 843--887, 2005.

\bibitem[Hanson(2010)]{Han10}
N.~R. Hanson.
\newblock \emph{Patterns of Discovery: An Inquiry into the Conceptual
  Foundations of Science}.
\newblock Cambridge University Press, 2010.

\bibitem[Hitchcock(2001)]{hitchcock:99}
C.~Hitchcock.
\newblock The intransitivity of causation revealed in equations and graphs.
\newblock \emph{Journal of Philosophy}, XCVIII\penalty0 (6):\penalty0 273--299,
  2001.

\bibitem[Hitchcock(2007)]{Hitchcock07}
C.~Hitchcock.
\newblock Prevention, preemption, and the principle of sufficient reason.
\newblock \emph{Philosophical Review}, 116:\penalty0 495--532, 2007.

\bibitem[Ignatiev et~al.(2019)Ignatiev, Narodytska, and
  Marques{-}Silva]{INMS19}
A.~Ignatiev, N.~Narodytska, and J.~Marques{-}Silva.
\newblock Abduction-based explanations for machine learning models.
\newblock In \emph{The Thirty-Third {AAAI} Conference on Artificial
  Intelligence, {AAAI}}, pages 1511--1519. {AAAI} Press, 2019.

\bibitem[Jiang et~al.(2024)Jiang, Khorram, and Fuxin]{Jiang_2024_CVPR}
M.~Jiang, S.~Khorram, and L.~Fuxin.
\newblock Comparing the decision-making mechanisms by transformers and cnns via
  explanation methods.
\newblock In \emph{Proceedings of the IEEE/CVF Conference on Computer Vision
  and Pattern Recognition (CVPR)}, pages 9546--9555, June 2024.

\bibitem[Lundberg and Lee(2017)]{lundberg2017unified}
S.~M. Lundberg and S.-I. Lee.
\newblock A unified approach to interpreting model predictions.
\newblock In \emph{Advances in Neural Information Processing Systems
  (Neur{IPS})}, volume~30, pages 4765--4774, 2017.

\bibitem[Marques{-}Silva and Ignatiev(2022)]{MSI22}
J.~Marques{-}Silva and A.~Ignatiev.
\newblock Delivering trustworthy {AI} through formal {XAI}.
\newblock In \emph{Thirty-Sixth {AAAI} Conference on Artificial Intelligence,
  {AAAI}}, pages 12342--12350. {AAAI} Press, 2022.

\bibitem[Miller(2019)]{Mil19}
T.~Miller.
\newblock Explanation in artificial intelligence: Insights from the social
  sciences.
\newblock \emph{Artificial Intelligence}, 267:\penalty0 1--38, 2019.

\bibitem[Nam et~al.(2020)Nam, Gur, Choi, Wolf, and Lee]{nam2020relative}
W.-J. Nam, S.~Gur, J.~Choi, L.~Wolf, and S.-W. Lee.
\newblock Relative attributing propagation: Interpreting the comparative
  contributions of individual units in deep neural networks.
\newblock In \emph{AAAI Conference on Artificial Intelligence}, volume~34,
  pages 2501--2508, 2020.

\bibitem[Papadimitriou(1984)]{Pap84}
C.~Papadimitriou.
\newblock The complexity of unique solutions.
\newblock \emph{Journal of {ACM}}, 31:\penalty0 492--500, 1984.

\bibitem[Petsiuk et~al.(2018)Petsiuk, Das, and Saenko]{petsiuk2018rise}
V.~Petsiuk, A.~Das, and K.~Saenko.
\newblock {RISE:} randomized input sampling for explanation of black-box
  models.
\newblock In \emph{British Machine Vision Conference ({BMVC})}. {BMVA} Press,
  2018.

\bibitem[Ribeiro et~al.(2016)Ribeiro, Singh, and Guestrin]{lime}
M.~T. Ribeiro, S.~Singh, and C.~Guestrin.
\newblock ``{W}hy should {I} trust you?'' {E}xplaining the predictions of any
  classifier.
\newblock In \emph{Knowledge Discovery and Data Mining (KDD)}, pages
  1135--1144. {ACM}, 2016.

\bibitem[Selvaraju et~al.(2017)Selvaraju, Cogswell, Das, Vedantam, Parikh, and
  Batra]{CAM}
R.~R. Selvaraju, M.~Cogswell, A.~Das, R.~Vedantam, D.~Parikh, and D.~Batra.
\newblock Grad-{CAM}: Visual explanations from deep networks via gradient-based
  localization.
\newblock In \emph{International Conference on Computer Vision (ICCV)}, pages
  618--626. IEEE, 2017.

\bibitem[Shi et~al.(2016)Shi, Yan, Xu, and Jia]{ecssd}
J.~Shi, Q.~Yan, L.~Xu, and J.~Jia.
\newblock Hierarchical image saliency detection on extended cssd.
\newblock \emph{IEEE Transactions on Pattern Analysis and Machine
  Intelligence}, 38\penalty0 (4):\penalty0 717--729, 2016.
\newblock \doi{10.1109/TPAMI.2015.2465960}.

\bibitem[Shitole et~al.(2021)Shitole, Li, Kahng, Tadepalli, and Fern]{SLKTF21}
V.~Shitole, F.~Li, M.~Kahng, P.~Tadepalli, and A.~Fern.
\newblock One explanation is not enough: Structured attention graphs for image
  classification.
\newblock In \emph{Neural Information Processing Systems (NeurIPS)}, pages
  11352--11363, 2021.

\bibitem[Shrikumar et~al.(2017)Shrikumar, Greenside, and
  Kundaje]{shrikumar2017learning}
A.~Shrikumar, P.~Greenside, and A.~Kundaje.
\newblock Learning important features through propagating activation
  differences.
\newblock In \emph{International Conference on Machine Learning (ICML)},
  volume~70, pages 3145--3153. {PMLR}, 2017.

\bibitem[S\o{}rensen(1948)]{Sor48}
T.~S\o{}rensen.
\newblock A method of establishing groups of equal amplitude in plant sociology
  based on similarity of species and its application to analyses of the
  vegetation on {Danish} commons.
\newblock \emph{Kongelige Danske Videnskabernes Selskab.}, 5:\penalty0
  1–--34, 1948.

\bibitem[Springenberg et~al.(2015)Springenberg, Dosovitskiy, Brox, and
  Riedmiller]{springenberg2015striving}
J.~T. Springenberg, A.~Dosovitskiy, T.~Brox, and M.~A. Riedmiller.
\newblock Striving for simplicity: The all convolutional net.
\newblock In \emph{ICLR (Workshop Track)}, 2015.
\newblock URL \url{http://arxiv.org/abs/1412.6806}.

\bibitem[Sundararajan et~al.(2017)Sundararajan, Taly, and
  Yan]{sundararajan2017axiomatic}
M.~Sundararajan, A.~Taly, and Q.~Yan.
\newblock Axiomatic attribution for deep networks.
\newblock In \emph{International Conference on Machine Learning}, pages
  3319--3328. PMLR, 2017.

\bibitem[Weslake(2015)]{Weslake11}
B.~Weslake.
\newblock A partial theory of actual causation.
\newblock \emph{British Journal for the Philosophy of Science}, 2015.
\newblock To appear.

\bibitem[Woodward(2003)]{Woodward03}
J.~Woodward.
\newblock \emph{Making Things Happen: A Theory of Causal Explanation}.
\newblock Oxford University Press, Oxford, U.K., 2003.

\end{thebibliography}

\newpage








\appendix

\section{Code and Data}
All data and experiments can be found at the following link: \url{https://figshare.com/s/37e6736e52fb0f92adf3}.

\multideepcover has been incorporated into \rex on GitHub at \url{https://github.com/ReX-XAI/} and is on PyPI as \url{https://pypi.org/project/rex-xai/}. 

The tool \sag can be found at \url{https://github.com/viv92/structured-attention-graphs}.

\section{Other models}

\begin{table}[]
    \centering
    \begin{tabular}{r|r|r|r|r}
    \toprule\toprule
    \multirow{2}{3em}{No.\\ Exp} & \multicolumn{3}{c}{Tools} \\
    \cmidrule(lr){2-5} 
    & \multideepcover & \sag 0.9 & \sag 0.4 & \sag 0.1\\ 
    \midrule\midrule 
        1 & 151  & 3243 & 2476 & 1532 \\
        2 & 2707 & 489 & 816  & 1012 \\
        3 & 826  & 123 & 345 & 566\\
        4 & 203  & 48  & 155 & 354 \\
        5 & 24   & 12  & 69  & 193  \\
        6 & 7    & 4  & 34  & 120 \\
        7 & 2    & 0   & 19 & 58   \\
        8 & 2    & 1   & 3  & 31\\
        9 & 1    & 0   & 1 & 32  \\      
        10 & 0   & 3   & 0  & 25 \\
    \midrule\midrule
     \% & 100\% & 100\% & 100\% & 100\%\\
    \bottomrule\bottomrule
    \end{tabular}
    \vspace*{2mm}
    \caption{Number of disjoint explanations produced by \multideepcover against \sag on VGG19.}
    \label{tab:results_baseline}
\end{table}
\Cref{tab:results_baseline} shows results on VGG19 on ImageNet-mini validation. The effect of \sag{}'s probability threshold on the number of results found is very clear. When the threshold is taken down to $0.1$, the number of (multiple) explanations found is, unsurprisingly, much higher. However, if we limit \sag to considering explanations which are only in top position, we find that only 20\% of \sag $0.1$ explanations are valid.

\begin{table*}[t]
    \begin{subtable}{.5\textwidth}\centering
         \begin{tabular}{r|r|r||r|r||r|r}
            \toprule\toprule
            \multirow{2}{2em}{No. Exp} & \multicolumn{6}{c}{Datasets} \\
            \cmidrule(lr){2-7} 
            \multirow{2}{2em}{} & \multicolumn{2}{c}{ImageNet1k} & \multicolumn{2}{c}{Voc} & \multicolumn{2}{c}{ECSSD} \\
            \cmidrule(lr){2-7} 
            & \small{\textsc{Mult}} & \small{\sag} & \small{\textsc{Mult}} & \small{\sag} & \small{\textsc{Mult}}  & \small{\sag} \\
            \midrule\midrule 
            1 & 3699  & 3516 & 1401 & 1138 & 950 & 918 \\
            2 & 181 & 328 & 34 & 223 & 35 & 67  \\
            3 & 27  & 55 & 5 & 64 & 7 & 9 \\
            4 & 3  & 11 & 1 & 13 & 2 & 4  \\
            5 & 2   & 6 & -- & 5 & 2 & -- \\
            6 & --    & 3& -- & 2 & -- & 1   \\
            7 & --    & 2 & -- & 2 & -- & --    \\
            8+ & --    & 1 & -- & 2 & -- & 1   \\
            \bottomrule\bottomrule
        \end{tabular}
        \caption{\sag with default parameters, \multideepcover with at least $90\%$ \\of the original confidence and top classification.}
        \label{tab:vit_filter}
        \end{subtable}%
        \hfill\hfill
    \begin{subtable}{.5\textwidth}\centering
         \begin{tabular}{r|r|r||r|r||r|r}
    \toprule\toprule
    \multirow{2}{2em}{No. Exp} & \multicolumn{6}{c}{Datasets} \\ 
    \cmidrule(lr){2-7} 
    \multirow{2}{2em}{} & \multicolumn{2}{c}{ImageNet1k} & \multicolumn{2}{c}{Voc} & \multicolumn{2}{c}{ECSSD} \\
    \cmidrule(lr){2-7} 
            & \small{\textsc{Mult}} & \small{\sag} & \small{\textsc{Mult}} & \small{\sag} & \small{\textsc{Mult}}  & \small{\sag} \\
     \midrule\midrule 
        1 & 2689  & 1432 & 1224 & 585 & 744 & 342 \\
        2 & 782 & 971 & 157 & 400 & 171 & 289  \\
        3 & 305  & 587 & 42 & 193 & 57 & 147 \\
        4 & 97  & 354 & 7 & 106 & 17 & 88  \\
        5 & 33   & 194 & 5 & 58 & 7 & 49 \\
        6 & 13   & 112 & 1 & 32 & 3 & 23   \\
        7 & 3    & 66 & -- & 18 & 1 & 13    \\
        8+ & --    & 207 & -- & 57 & -- & 49   \\
    \bottomrule\bottomrule
    \end{tabular}
    \caption{\multideepcover with default parameters, while \sag has a probability threshold of $0.01$.}
    \label{tab:vit_no_filter}
    \end{subtable}%
    \vspace{5mm}%
    \caption{Positions of \sag{}'s explanations in the output of a `vit\_b\_32' model. When the probability threshold is $0.9$, \sag performs relatively well (\Cref{tab:vit_filter_pos}). When the probability threshold is removed for \multideepcover comparison, the output position of the explained classification slides further down on the list.}
    \vspace{5mm}
\end{table*}

\Cref{tab:vit_no_filter} shows the complete results on `vit\_b\_32', while \Cref{tab:vit_no_filter_pos} shows the 
positions of \sag explanations with $0$ probability threshold. \Cref{tab:vit_filter} 
shows the complete results on `vit\_b\_32', while \Cref{tab:vit_filter_pos} shows the positions of \sag explanations with $0.9$ probability threshold. Neither \multideepcover nor \sag finds as many explanations as with ResNet50. If we exclude \sag{}'s sub-top classifications, the performance between \sag and \multideepcover at $0.9$ is very similar. The performance at $0.0$ is comparable to ResNet50: \multideepcover still finds top classifications, \sag is close to uniformly distributed. Indeed, one set of pixels was in position $315$ in the output, but still counted as an explanation.

\begin{table*}[t]
    \begin{subtable}{.45\linewidth}\centering
        \begin{tabular}{r|r|r|r}
    \toprule\toprule
    \multirow{2}{4.8em}{Position} & \multicolumn{3}{c}{No. \sag Explanations} \\
    \cmidrule(lr){2-4} 
    & ImageNet & VOC & ECSSD \\ 
    \midrule\midrule 
        0 & 3801  & 1334 & 918 \\
        1 & 102 & 93 & 61 \\
        2 & 18  & 15 & 18 \\
        3 & 2  & 6 & 2  \\
        4 & --   & 1  & 1  \\
        5+ & --   & --  & --  \\
    \midrule\midrule
     Total in $0$ (\%) & 97\% & 92\% & 0.92\% \\
    \bottomrule\bottomrule
    \end{tabular}
    \caption{Position in the output tensor for ViT explanations as produced by \sag at $0.9$ probability threshold.}
    \label{tab:vit_filter_pos}
    \end{subtable}%
    \hfill
    \hfill
    \begin{subtable}{.45\linewidth}\centering
       \begin{tabular}{r|r|r|r}
    \toprule\toprule
    \multirow{2}{4.8em}{Position} & \multicolumn{3}{c}{No. \sag Explanations} \\
    \cmidrule(lr){2-4} 
    & ImageNet & VOC & ECSSD \\ 
    \midrule\midrule 
        0 & 107  & 30 & 21 \\
        1 & 135 & 31  & 33 \\
        2 & 140  & 30 & 32 \\
        3 & 153  & 32 & 26  \\
        4 & 186   & 46  & 28  \\
        5+ & 3202   & 1280 & 860 \\
    \midrule\midrule0
    Total in $0$ (\%) & 2\% & 2\% & 2\% \\
    \bottomrule\bottomrule
    \end{tabular}
    \caption{Position in the output tensor for ViT explanations as produced by \sag at $0$ probability threshold.}
    \label{tab:vit_no_filter_pos}
    \end{subtable}
    \vspace{5mm}
    \caption{Positions of \sag{}'s explanations in the output of a ViT model. When the probability threshold is $0.9$, \sag performs relatively well (\Cref{tab:vit_filter_pos}). When the probability threshold is removed for \multideepcover comparison, the output positions suffer greatly.}
    \vspace{5mm}
\end{table*}

\Cref{tab:conv_no_filter} shows the complete results on `vit\_b\_32', while \Cref{tab:conv_no_filter_pos} shows the positions of \sag explanations with $0$ probability threshold.

\begin{table*}[t]
    \begin{subtable}{.5\textwidth}\centering
         \begin{tabular}{r|r|r||r|r||r|r}
            \toprule\toprule
            \multirow{2}{2em}{No. Exp} & \multicolumn{6}{c}{Datasets} \\
            \cmidrule(lr){2-7} 
            \multirow{2}{2em}{} & \multicolumn{2}{c}{ImageNet1k} & \multicolumn{2}{c}{Voc} & \multicolumn{2}{c}{ECSSD} \\
            \cmidrule(lr){2-7} 
            & \small{\textsc{Mult}} & \small{\sag} & \small{\textsc{Mult}} & \small{\sag} & \small{\textsc{Mult}}  & \small{\sag} \\
            \midrule\midrule 
            1 & 3423  & 2829 & 1266 & 1154 & 823 & 708 \\
            2 & 408   & 683  & 138  & 202 & 143 & 181  \\
            3 & 68    & 243  & 39   & 59 & 24 & 80 \\
            4 & 21    & 129  & 6    & 19 & 6 & 21  \\
            5 & 1     & 22   & -- & 11 & 3 & 5 \\
            6 & --    & 11   & -- & 3 & 1 & 4   \\
            7 & --    & 3   & -- & 1 & -- & 1    \\
            8+ & --    & 3  & -- & -- & -- & --   \\
            \bottomrule\bottomrule
        \end{tabular}
        \caption{\sag with default parameters, \multideepcover with at least $90\%$ \\of the original confidence and top classification.}
        \label{tab:conv_filter}
        \end{subtable}%
        \hfill
    \begin{subtable}{.5\textwidth}\centering
         \begin{tabular}{r|r|r||r|r||r|r}
    \toprule\toprule
    \multirow{2}{2em}{No. Exp} & \multicolumn{6}{c}{Datasets} \\ 
    \cmidrule(lr){2-7} 
    \multirow{2}{2em}{} & \multicolumn{2}{c}{ImageNet1k} & \multicolumn{2}{c}{Voc} & \multicolumn{2}{c}{ECSSD} \\
    \cmidrule(lr){2-7} 
            & \small{\textsc{Mult}} & \small{\sag} & \small{\textsc{Mult}} & \small{\sag} & \small{\textsc{Mult}}  & \small{\sag} \\
     \midrule\midrule 
        1 & 85  & 289 & 127 & 143 & 83 & 69 \\
        2 & 369 & 393 & 284 & 220 & 175 & 97  \\
        3 & 724  & 467 & 363 & 202 & 235 & 119 \\
        4 & 941  & 484 & 355 & 182 & 249 & 131  \\
        5 & 957   & 469 & 229 & 174 & 157 & 118 \\
        6 & 568    & 329 & 70 & 138 & 78 & 88   \\
        7 & 221   & 268 & 19 & 63 & 22 & 84    \\
        8+ & 58    & 1224 & 2 & 327 & 1 & 57   \\
    \bottomrule\bottomrule
    \end{tabular}
    \caption{\multideepcover with default parameters, while \sag has a probability threshold of $0.01$.}
    \label{tab:conv_no_filter}
    \end{subtable}%
    \vspace{5mm}%
   \caption{Positions of \sag{}'s explanations in the output of a `convnext\_large' model. When the probability threshold is $0.9$, \sag performs relatively well (\Cref{tab:conv_filter_pos}). When the probability threshold is removed for \multideepcover comparison, the output position of the explained classification slides further down on the list.}
    \vspace{5mm}
\end{table*}

\begin{table*}[t]
    \begin{subtable}{.45\linewidth}\centering
        \begin{tabular}{r|r|r|r}
    \toprule\toprule
    \multirow{2}{4.8em}{Position} & \multicolumn{3}{c}{No. \sag Explanations} \\
    \cmidrule(lr){2-4} 
    & ImageNet & VOC & ECSSD \\ 
    \midrule\midrule 
        0 & 3891  & 1390 & 955 \\
        1 & 29 & 56 & 39 \\
        2 & 2  & 3 & 5 \\
        3 & 1  & -- & 1  \\
        4 & --   & --  & --  \\
        5+ & --   & --  & --  \\
    \midrule\midrule
     Total in $0$ (\%) & 94\% & 84\% & 0.86\% \\
    \bottomrule\bottomrule
    \end{tabular}
    \caption{Position in the output tensor for `convnext\_large' explanations as produced by \sag at $0.9$ probability threshold.}
    \label{tab:conv_filter_pos}
    \end{subtable}%
    \hfill
    \begin{subtable}{.45\linewidth}\centering
       \begin{tabular}{r|r|r|r}
    \toprule\toprule
    \multirow{2}{4.8em}{Position} & \multicolumn{3}{c}{No. \sag Explanations} \\
    \cmidrule(lr){2-4} 
    & ImageNet & VOC & ECSSD \\ 
    \midrule\midrule 
        0 & 48  & 12 & 6 \\
        1 & 71 & 17  & 11 \\
        2 & 93  & 26 & 9 \\
        3 & 146  & 38 & 8  \\
        4 & 156   & 41  & 37  \\
        5+ & --   & -- & -- \\
    \midrule\midrule
    Total in $0$ (\%) & 1\% & $<1$\% & $<1$\% \\
    \bottomrule\bottomrule
    \end{tabular}
    \caption{Position in the output tensor for `convnext\_large' explanations as produced by \sag at $0$ probability threshold.}
    \label{tab:conv_no_filter_pos}
    \end{subtable}
    \vspace{5mm}
    \caption{Positions of \sag{}'s explanations in the output of a ConvNext model. When the probability threshold is $0.9$, \sag performs relatively well (\Cref{tab:conv_filter_pos}). When the probability threshold is removed for \multideepcover comparison, the output positions suffer greatly.}
    \vspace{5mm}
\end{table*}

The results on `convnext\_large' are revealing for compositionality. The number of explanations which are both the top classification and have at least $90\%$ of original confidence is quite low compared to no threshold. This indicates that, while the model does not require a large number of pixels to make a classification, it \emph{does} require a large number in order to have high confidence.

The results over all three models show that \multideepcover{}'s output is not contingent on any probability thresholding, unlike with \sag. While \multideepcover does not always produce as many explanations as \sag for all models and datasets, those explanations it finds are higher quality in that they are smaller, always the most likely classification and can have a tunable confidence threshold does not affect the fundamental requirements of causal explanation.

\section{Occluded Images}\label{appendix:occluded}

\begin{figure*}[t]
    \centering
    \begin{subfigure}{\textwidth}
        \centering
        \includegraphics[scale=0.4]{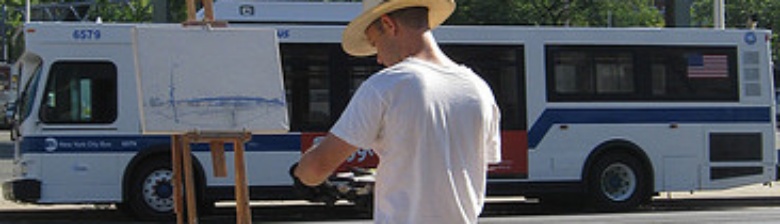}
        \caption{bus}
        \label{occ:bus}
        \vspace{1cm}
    \end{subfigure}
    \hfill
    \begin{subfigure}{0.2\textwidth}
        \centering
        \includegraphics[scale=0.3]{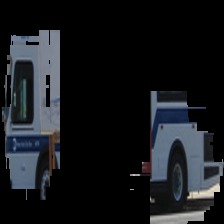}
        \caption{\multideepcover explanation}
        \label{occ:one}
    \end{subfigure}
    \hfill
     \begin{subfigure}{0.2\textwidth}
        \centering
        \includegraphics[scale=0.3]{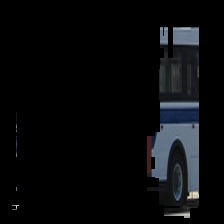}
        \caption{\multideepcover explanation}
        \label{occ:two}
    \end{subfigure}
    \hfill
      \begin{subfigure}{0.2\textwidth}
        \centering
        \includegraphics[scale=0.09]{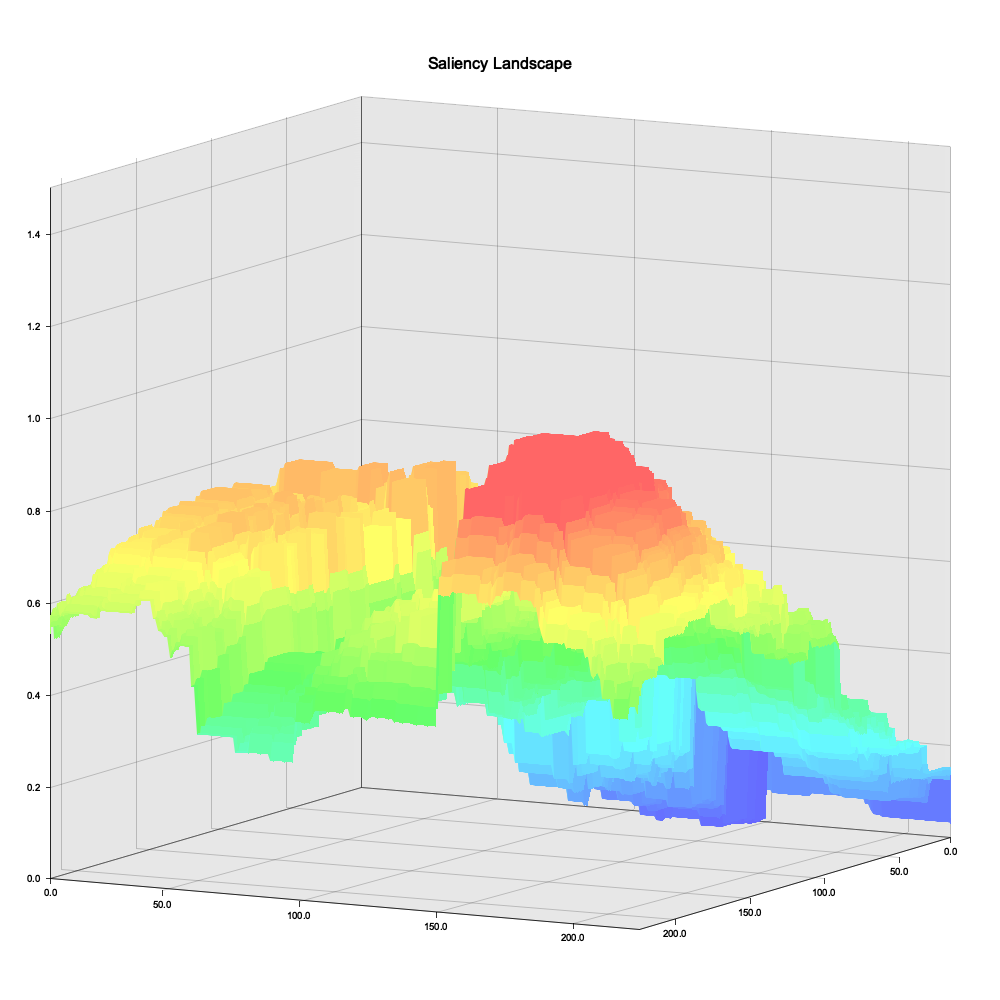}
        \caption{Responsibility landscape}
        \label{occ:landscape}
    \end{subfigure}
    \hfill
     \begin{subfigure}{0.2\textwidth}
        \centering
        \includegraphics[scale=0.3]{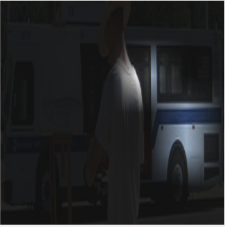}
        \caption{\sag explanation}
        \label{occ:sag}
    \end{subfigure}
    \vspace{5mm}
    \caption{\Cref{occ:one} shows a disjoint explanation produced by \multideepcover for \Cref{occ:bus}. \multideepcover also produces the explanation~\Cref{occ:two} which contains just a thin strip of the bus on the left hand side. While a smaller explanation, a human might prefer~\Cref{occ:one}. \Cref{occ:landscape} shows the responsibility landscape \multideepcover produces. All explanations are resized to $224\times224$ as per model requirements. \Cref{occ:sag} shows that \sag fails to find a disjoint explanation and included the man's shoulder.}
    \label{fig:occ}
\end{figure*}

As \multideepcover's explanation extraction algorithm is based on a spatial search, discovery of non-contiguous 
explanations can become challenging. We have not conducted a comprehensive study on multiple explanations of partially occluded images. 
However, a preliminary experiment indicates that \multideepcover successfully finds a non-contiguous explanation of a partially
occluded image, if supported by the model (\Cref{occ:sag}). \sag, in contrast,
failed to find a non-contiguous explanation. 


\end{document}